\theoremstyle{plain}
\newtheorem{theorem}{Theorem}[section]
\newtheorem{lemma}[theorem]{Lemma}
\theoremstyle{definition}
\theoremstyle{remark}
\newtheorem{example}[theorem]{Example}
\icmltitlerunning{Post-Norm can Resharpen Attention}
\begin{document}

\twocolumn[
  \icmltitle{Post-Norm can Resharpen Attention}

  \icmlsetsymbol{equal}{*}

\begin{icmlauthorlist}
  \icmlauthor{Pál Zsámboki}{renyi}
  \icmlauthor{Benjamin Levi}{rochester}
  \icmlauthor{David Ansel Josef Smith}{alabama}
  \icmlauthor{Mitansh Kagalwala}{virginia}
  \icmlauthor{Arlington Kell}{gatech,equal}
  \icmlauthor{Samuel Liechty}{byu,equal}
  \icmlauthor{Cong Wang}{carleton}
\end{icmlauthorlist}

\icmlaffiliation{renyi}{HUN-REN Alfréd Rényi Institute of Mathematics, Budapest, Hungary}
\icmlaffiliation{rochester}{University of Rochester, Rochester, NY, USA}
\icmlaffiliation{alabama}{The University of Alabama, Tuscaloosa, AL, USA}
\icmlaffiliation{virginia}{University of Virginia, Charlottesville, VA, USA}
\icmlaffiliation{gatech}{Georgia Institute of Technology, Atlanta, GA, USA}
\icmlaffiliation{byu}{Brigham Young University, Provo, UT, USA}
\icmlaffiliation{carleton}{Carleton College, Northfield, MN, USA}

\icmlcorrespondingauthor{Pál Zsámboki}{zsamboki@renyi.hu}

  \icmlkeywords{attention dispersion, length generalization, post-norm, transformers}

  \vskip 0.3in
]

\printAffiliationsAndNotice{\icmlEqualContribution}

\begin{abstract}
Length Generalization is the essential capacity of autonomous agents to perform tasks in longer contexts than those encountered during training. To systematically study this feat, we test how well models can approximate the next token distributions in algorithmic tasks. This is to take into account the realistic possibility of multiple next tokens being legal. We present a prototypical benchmark for this line of study: in the Set Complement Task, the model needs to output a uniform distribution over tokens not in the input. We prove a theorem that states simple transformers can length generalize on this task, however, with performance degradation due to attention dispersion. A mechanistic reading of how dispersion takes effect lets us discover a remedy: Post-Norm can Resharpen Attention. We present experimental evidence to support this idea. We also show that Exponential Moving Averages can help the issue of noisy gradients that arises when many next tokens are legal. We validate the general app\-licability of our proposed methods on a suite of formal language experiments. Our source code will be available upon publication.
\end{abstract}

\section{Introduction}

Since it was discovered that transformer \cite{vaswani2017attention}-based large language models (LLMs) can be aligned with human interests \cite{ouyang2022training}, LLM agents are being deployed in roles as diverse as application developers, counsellors, job interviewers, or research assistants. For both safety and efficiency it is paramount that we understand how these agents make their decisions.

Prior work revealed that to improve the reasoning capabilities of transformers, one can prompt \cite{wei2022chain} or train them to think in small steps. This makes reasoning transformers produce their answers akin to game-playing agents that map out various potential trajectories before making a move.

Therefore, we can gain insights on the reasoning processes of LLM agents by studying how models with similar architecture learn to play games. In the present work, we focus on the most fundamental skill an agent playing a game as simple as tic-tac-toe or as complex as go has to acquire: tell which board positions are not yet taken.

We abstract this task as the Set Complement Task, introduced in Subsection \ref{subsection:set complement task}: given an input sequence of tokens without repetition, the model has to output a uniform distribution on the tokens absent from the input sequence. Note that we aim for more that top-1 accuracy, that is for the model to predict as most probable next token one that is missing from the input: such a basic component has to be learnt free from bias, that is, without predicting one valid token more often than the other.

Our theoretical contribution is Theorem \ref{theorem:bounds and length generalization}: a characterization of single-layer, attention-only, uniform attention models that can learn the task. First of all, we give tight bounds for the embedding and value dimensions required of the model. Second, we show that if the model can solve the task on input sequences of length 1 and 2, then it can solve the task for input sequences of any lengths, albeit at reduced precision.

This connects our work to the topic of length generalization: if a model robustly learned to perform an algorithmic task, then it should be able to produce a correct output on input sequences longer than those in its training set. It is an active field of study which tasks transformers can length generalize on: we discuss this in detail in Section \ref{section:related work}.

Overcoming obstacles to length generation is of particular interest for this line of study. In Subsection \ref{subsection:post-norm can resharpen attention}, a mechanistic reading of the way our models make their inferences identifies the reduction in precision as a particular case of attention dispersion \cite{hahn2020theoretical,velickovic2025softmax}: by taking a mean along the sequential dimention, softmax attention reduces the relative differences between attention weights as sequence length increases. We hypothesize that normalization of the output of the attention blocks may be able to undo the reduction in the amplitude of value vectors and thus mitigate this effect.

We turn to the study of training dynamics in Subsection \ref{subsection:bema}. We use conventional next token logit prediction training via negative log likelihood of one-hot sampled target distributions. Mechanistic analysis of training unveils a further obstacle for length generalization: in our task, the tokens that follow short sequences are sampled from many possibilities, which makes gradients noisy. Therefore, in Subsection \ref{subsection:bema}, we make our second hypothesis: using the stabilizer Bias-corrected Exponential Moving Average (BEMA) may attenuate this effect.

We investigate the effect of our proposed strategies in random hyperparameter search experiments. We describe our experimental protocol in Section \ref{section:experimental setup}. We report our experiment results in Section \ref{section:experiment results}: in Subsection \ref{subsection:sct results}, we provide experimental evidence that our proposed methods indeed foster length generalization in the Set Complement Task. We show that our methods have an effect in further length generalization tasks: Maximum Retrieval in Subsection \ref{subsection:max retrieval results}, a bounded Dyck language in Subsection \ref{subsection:dyck results}, and Tomita languages 3 and 7 in \ref{subsection:tomita results}.

\section{Related Work}\label{section:related work}

\emph{Length Generalization} studies the conditions under which sequen\-ce-to-sequence models retain their performance on inputs longer than those seen during training. One train of results seeks to find criteria for algorithms transformers can length generalize on \cite{bhattamishra2020ability}. An important theoretical tool in this direction is the Restricted Access Sequence Processing Language (RASP) \cite{rush2023thinking}, a programming language that a transformer can implement. It was conjectured that length generalization is possible if there is a simple implementation in RASP-L \cite{zhou2024what}. Afterwards, a version of this conjecture was proven \cite{huang2025a} using limit transformers, and a version of the C-RASP language \cite{yang2024counting}. In the usual algorithmic approach to the study of length generalization, if multiple solutions are possible, then the model has to predict the set of valid solutions as a singleton. We aim to bring in an alternative point of view closer to the spirit of language modeling: if multiple solutions are possible, the model should learn to output the correct distribution between them.

\emph{Attention Dispersion} \cite{hahn2020theoretical,velickovic2025softmax} is a drawback of softmax attention that forbids generalization to arbitrary sequence lengths both in toy and language models. Multiple previous works considered fixing this dilution by an explicit, length-dependent transformation of the attention formula, either a $\log(s)$ multiplier on the attention logits \cite{chiang2022overcoming}, or a fixed temperature function found by polynomial interpolation \cite{velickovic2025softmax,peng2024yarn}. However, to our knowledge, in the context of length generalization, post-normalization was only considered theoretically \cite{hahn2020theoretical} or in custom-made constructions \cite{yao2021self}, not as a general helper.

In the original transformer \cite{vaswani2017attention}, \emph{normalization layers} are put on the residual stream after the attention and the feedforward block updates have been applied. This approach is called called \emph{post-normalization}. Later on, it was discovered that this makes training unstable at initialization \cite{xiong2020layer}, so normalizing the inputs to the attention and feedforward blocks was proposed, called \emph{pre-normalization}. However, it was observed that pre-normalization can cause massive activations to appear later in training \cite{sun2024massive}. Further options were considered in specific architectures, for example: normalizing both the input and the output of the residual blocks as \emph{Sandwich-LN} in the text-to-image model Cogview \cite{ding2021cogview}, and normalizing the output of the residual connections in the language models Swin Transformer 2 V2 \cite{liu2022swin}. These ideas were combined to \emph{peri-normalization} \cite{kim2025periln}, where the inputs and outputs of the residual blocks, the input of the unembedding, and optionally the output of the embedding are normalized. In all these works, normalization is studied from the point of view of stable training, not length generalization.

\emph{Mechanistic Interpretability} aims to find minimal subnetworks, so-called \emph{circuits}, of an artificial neural network that satisfy a given task. In the case of transformers, several such circuits have been identified such as induction heads \cite{elhage2021mathematical}, indirect object identification circuits \cite{wang2023interpretability}, greater-than circuits \cite{hanna2023how}, and retrieval heads \cite{wu2025retrieval}. Of particular interest to the present work are the studies on OthelloGPT, which showed that in the residual stream of a GPT-1 style model trained to predict legal moves on random Othello games, via nonlinear \cite{li2023emergent} and linear \cite{nanda2023emergent} probing, one can find representations of board state. We intend to extend the compendium of known circuits by minimal transformers that can solve the Set Completion Task.

The study of the \emph{next token distributions} output by LLMs brings a detailed view on how the models generate theirs answers, and how expressive they can get. An important part of this point of view is how \emph{calibrated} are the models, that is how well do the predicted next token distributions approximate the target next token distribution. It is shown \cite{shlegeris2024language} that pretrained models as small as GPT-Neo-1.3B surpass humans in next token prediction on the OpenWebText dataset \cite{gokaslan2019openweb}, both in top-1 accuracy, and perplexity. However, calibration to the pretraining corpus can be proven to bring in hallucinations \cite{kalai2024calibrated}, at the very least on facts not present in the training dataset, given the assumption that there are exponentially more ways to complete a sentence in an untruthful way. Neither base or aligned models are calibrated in numeric contexts such as generating tokens from a uniform distribution \cite{lovering2025language}, rather they have strong systematic biases such as dependence on token order. Through soft \cite{li2021prefix} and hard \cite{wallace2019universal} prompt tuning experiments, it was discovered \cite{wang2025distribution} that transformers are more capable of outputting distributions of very low or very high entropy, those with outliers, or those that were output by other transformers. The experiments were conducted both on pretrained and randomly initialized models, thus indicating that the limits in expressivity may stem from the transformer architecture, or the softmax output. In our work, we also investigate if the model learns the true distribution among next tokens, thus indicating that predictions are free from bias.

\section{Preliminaries}

In the first, theoretical part of the paper, we will study single layer, attention only transformers trained on the Set Complement Task. In this Section, we shall introduce the requisite notions. In the second half of the paper, we will report on experiments on full transformers trained on more complex tasks.

\subsection{The Set Complement Task} \label{subsection:set complement task}

In what follows, we shall introduce the \emph{set complement task} that the models we interpret are trained on. To put it very succintly, the models are required to output a uniform distribution over tokens absent from an input without repetitions. Let us formalize this setting.

Let $v$ denote the number of distinct tokens. As they are only meant to signify the $v$ distinct elements of a finite set, we will denote tokens by integers. We let the \emph{vocabulary} or \emph{ambient set} be the finite set $\mathbb V=\{1,\dotsc,v\}$ of $v$ of distinct tokens. We call $v$ the \emph{vocabulary size} or \emph{ambient set size}. As it is not our focus here, we will forego using special tokens such as beginning of sequence, end of sequence, or padding. The valid input sequences are sequences $\mathbf t=(t_1,\dotsc,t_s)\in\mathbb V^s$ of length $1\le s<v$ without repetitions: for distinct indices $1\le i\ne j\le s$, we have $t_i\ne t_j$. The \emph{underlying set} of $\mathbf t$ is the set $[\mathbf t]=\{t_1,\dotsc,t_s\}$ of tokens in $\mathbf t$. We let $|\mathbf t|=s$ denote the length of $\mathbf t$.

We represent the set of categorical distributions on $v$ entries as the \emph{probability $(v-1)$-simplex}
$$
\Delta^{v-1}=\left\{\mathbf p\in\mathbb R^v_{\ge0}:\sum_{t=1}^v p_t=1\right\}.
$$
Let $\mathbb X$ denote the set of valid input sequences. Then the perfect solution to the task is the function $p^*:\mathbb X\to\Delta^{v-1}$ such that, for any input sequence $\mathbf t\in\mathbb X$ and token $t\in\mathbb V$, we have
\begin{equation}\label{equation:perfect solution}
p^*(\mathbf t)_t=\begin{cases}
0 & i\in[\mathbf t],\\
\frac{1}{v-s} & i\notin[\mathbf t].
\end{cases}
\end{equation}

\subsection{Minimal Transformers}\label{subsection:minimal transformers}

We will seek to approximately solve the set complement task with parametric models of the form
\begin{equation}\label{equation:softmax model}
\mathbb X\xrightarrow{f_\theta}\mathbb R^v\xrightarrow{\mathrm{softmax}}\Delta^{v-1},
\end{equation}
where $f_\theta$ denotes a single-layer, attention-only, single-head, decoder-only transformer with parameter vector $\theta$. Let $\mathbf t=(t_1,\dots,t_s)\in\mathbb X$ be an input sequence. We call the output $f_\theta(\mathbf t)\in\mathbb R^{v}$ the \emph{next token logit vector}. The next token logit vector is formed as the linear combination
\begin{equation}\label{equation:next token logit vector}
f_\theta(\mathbf t)
=\mathbf B_{t_s,:}+\sum_{i=1}^s\frac{ a_{t_s,t_i}}{\sum_{i'=1}^s a_{t_s,t_{i'}}}\mathbf D_{t_i,:},
\end{equation}
the terms in which are defined as follows:

The \emph{next token logit bias matrix} $\mathbf B=\mathbf E\mathbf U\in\mathbb R^{v\times v}$ is the product of the \emph{token embedding parameter matrix} $\mathbf E\in\mathbb R^{v\times d}$ and the \emph{unembedding parameter matrix} $\mathbf U\in\mathbb R^{d\times v}$. For any $1\le t\le v$, the $t$-th row $\mathbf E_{t,:}\in\mathbb R^d$ is a $d$-dimensional trainable vector the token $t\in\mathbb V$ is mapped to. We call $d$ the \emph{embedding dimension}, the vector space $\mathbb R^d$ the \emph{residual stream}, and the vector space $\mathbb R^v$ the \emph{logit space}. In our minimal transformers, we do not use positional encodings. 

The \emph{unnormalized per-token attention weight matrix} $\mathbf A$ is formed as follows: We get the \emph{query and key per-token matrices} $\mathbf Q=\mathbf E'\mathbf W_Q$, $\mathbf K=\mathbf E'\mathbf W_K$ via the \emph{query and key parameters matrices} $\mathbf W_Q,\mathbf W_K\in\mathbb R^{d\times d_k}$. We call $d_k$ the \emph{key dimension}. In our setup, the key dimension is not necessarily equal to the embedding dimension divided by the number of attention heads. The \emph{per-token attention logit matrix} is the product $\mathbf A'=\mathbf Q\mathbf K^T$. This yields the unnormalized per-token attention weight matrix via the elementwise formula: $ a_{i,j}=\exp( a'_{i,j}/\sqrt{d_k})$. Note that as we define the output in Equation (\ref{equation:next token logit vector}) for one input sequence only, we do not have to be explicit about causal attention.

The \emph{next token logit displacement matrix} $\mathbf D=\mathbf E\mathbf W_V\mathbf W_O\mathbf U$ is formed via the \emph{value and output parameter matrices} $\mathbf W_V\in\mathbb R^{d\times d_v}$, $\mathbf W_O\in\mathbb R^{d_v\times d}$. We call $d_v$ the \emph{value dimension}. In our setup, the value dimension is not necessarily equal to the key dimension, nor is it necessarily equal to the embedding dimension divided by the number of attention heads.

\section{Theoretical Analysis}

We say that the model $f_\theta\colon\mathbb X\to\mathbb R^v$ \emph{has precision $C>0$} if for all input sequences $\mathbf t\in\mathbb X$, and tokens $u\in\mathbb V\setminus [\mathbf t]$, $v\in\mathbb V$, we have
\begin{equation}\label{equation:has precision}
f_\theta(\mathbf t)_u - f_\theta(\mathbf t)_v \begin{cases}
> C & v\in[\mathbf t],\\
= 0 & v\in\mathbb X\setminus [\mathbf t].
\end{cases}
\end{equation}
We say that the model \emph{has precision $C>0$ at (resp.~up to) length $s$}, if the above property (\ref{equation:has precision}) is satisfied for input sequences $\mathbf t\in\mathbb X$ of length $|\mathbf t|= s$ (resp.~$\le s$).

\subsection{A Hardcoded, Minimal Solution}

Let us first provide a hardcoded model that is precise up to level $D$. In Theorem \ref{theorem:bounds and length generalization}, we will show that its embedding and key dimensions $v-1$ are actually the smallest possible dimensions with which it is possible to solve the task.

\begin{example}\label{example:hardcoded minimal example}
For any vocabulary size $v$, we now give a formula for a model that is arbitrarily close to being perfect if we choose $C>0$ large enough. It uses embedding and value dimensions $d=d_v=v-1$, and key dimension $d_k=1$.

We can use the parameter matrices $\mathbf E=(I|-\boldsymbol1)^T$, $\mathbf U=(-I|\boldsymbol0)$, $\mathbf W_Q=\mathbf W_K=\boldsymbol0$, $
\mathbf W_V=vC\mathbf I$, and $\mathbf W_O=I$.

Note that with these parameters, we get
\begin{gather*}
\mathbf B=\begin{pmatrix}
-1 & 0 & \dotsb & 0 & 0 \\
0 & -1 & \dotsb & 0 & 0 \\
\vdots & \vdots & \ddots & \vdots & \vdots \\
0 & 0 & \dotsb & -1 & 0 \\
1 & 1 & \dotsb & 1 & 0
\end{pmatrix},\,
\mathbf A=\boldsymbol1,\,\\
\mathbf D=\begin{pmatrix}
-vC & 0 & \dotsb & 0 & 0 \\
0 & -vC & \dotsb & 0 & 0 \\
\vdots & \vdots & \ddots & \vdots & \vdots \\
0 & 0 & \dotsb & -vC & 0 \\
vC & vC & \dotsb & vC & 0
\end{pmatrix}.
\end{gather*}
Thus, one can check by hand that $f_\theta$ has precision $C$.
\end{example}

Note that the hardcoded model $f_\theta$ has constant attention. Since when training transformers with weight decay induces low-rank attention logit matrices \cite{kobayashi2024weight}, we will continue our theoretical investigation with the assumption of constant attention. Therefore, the formula (\ref{equation:next token logit vector}) for the next token logit vector $f_\theta(\mathbf t)$ at the input sequence $\mathbf t=(t_1,\dotsc,t_s)\in\mathbb X$ simplifies to the following:
\begin{equation}\label{equation:next token logit vector with constant attention}
f_\theta(\mathbf t)=\mathbf B_{t_s,:}+\frac{1}{s}\sum_{i=1}^s\mathbf D_{t_i,:}.
\end{equation}

\subsection{Length Generalization at the Price of Less Precision}

In this Subsection, we prove tight bounds on the embedding and value dimensions of a constant attention model $f_\theta$. Moreover, we show that if $f_\theta$ approximates the ideal solution on lengths 1 and 2, and moreover it satisfies a balance criterion on token displacements, then it length generalizates, albeit with decreasing precision as length increases.

\begin{theorem}\label{theorem:bounds and length generalization} Assume that the model $f_\theta$ has constant attention. Then the following statements hold:
  
(a) Suppose that the model $f_\theta$ has precision $C>0$ at length 1. Then the matrix $\mathbf B + \mathbf D$ has rank at least $v-1$. In particular, we have $d\ge v-1$.

(b) Suppose moreover that the model $f_\theta$ also has precision $C>0$ at length 2. Then the matrix $\mathbf D$ also has rank at least $v-1$. In particular, we have $d_v\ge v-1$.

(c) Suppose moreover that the following condition is satisfied: for all pairs of distinct tokens $t,u\in\mathbb V$, we have
  \begin{equation}\label{equation:precision upper bound}
    f_\theta((t))_u-f_\theta((t))_t < 2C.
  \end{equation}
  Then for each $3\le s<v$, the model $f_\theta$ has precision $\frac{2}{ s}C$ at length $ s$.

\end{theorem}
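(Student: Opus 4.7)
The plan is to exploit the constant-attention simplification $f_\theta(\mathbf t)=\mathbf B_{t_s,:}+\tfrac{1}{s}\sum_i \mathbf D_{t_i,:}$ and to rewrite it, via $\mathbf m_t:=\mathbf B_{t,:}+\mathbf D_{t,:}$, as $f_\theta(\mathbf t)=\mathbf m_{t_s}+\tfrac{1}{s}\sum_{i<s}(\mathbf D_{t_i,:}-\mathbf D_{t_s,:})$, so that length-$1$ and length-$2$ information can be pulled in cleanly. Part~(a) is then immediate: length-$1$ precision forces $\mathbf M:=\mathbf B+\mathbf D$ into the form $\boldsymbol\alpha\mathbf 1^T-\mathrm{diag}(\boldsymbol\delta)$ with every $\delta_t>C>0$, and rank subadditivity (a rank-$1$ perturbation of a full-rank diagonal matrix) gives $\mathrm{rank}(\mathbf M)\ge v-1$; since $\mathbf M=(\mathbf E+\mathbf E'\mathbf W_V\mathbf W_O)\mathbf U$ factors through $\mathbb R^d$, we conclude $d\ge v-1$.

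For (b), I first characterize $\mathbf D$ structurally. For any length-$2$ input $(t_1,t_2)$ and any two $u,w\notin\{t_1,t_2\}$, the equality $f_\theta((t_1,t_2))_u=f_\theta((t_1,t_2))_w$ combined with $(\mathbf m_{t_2})_u=(\mathbf m_{t_2})_w$ forces $(\mathbf D_{t_1,:}-\mathbf D_{t_2,:})_u$ to be constant in $u$ outside $\{t_1,t_2\}$; a cocycle argument over triples $(t_1,t_2,t_3)$ then yields the decomposition $\mathbf D=\mathbf 1\mathbf c^T+\boldsymbol\gamma\mathbf 1^T+\mathrm{diag}(\mathbf a)$ for some $\mathbf c,\boldsymbol\gamma,\mathbf a\in\mathbb R^v$. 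Length-$2$ precision at the non-last input position gives $-\tfrac{1}{2}a(t)>C$, so $a(t)<-2C<0$ for every $t$. The main obstacle is the rank lower bound itself: naive subadditivity on this three-term decomposition only yields $v-2$, and one must exploit the sign information on $\mathbf a$. I do so by a left-null-space analysis: any relation $\sum_t\lambda_t\mathbf D_{t,:}=\mathbf 0$ reduces coordinate-wise to $\lambda_u=-(Ac_u+B)/a(u)$ with $A:=\sum_t\lambda_t$ and $B:=\sum_t\lambda_t\gamma(t)$, and self-consistency collapses to a $2\times 2$ homogeneous system in $(A,B)$ one of whose entries equals $\sum_u 1/a(u)$, strictly negative since every $a(u)<0$. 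Hence that system has rank at least $1$, the left null space of $\mathbf D$ has dimension at most $1$, $\mathrm{rank}(\mathbf D)\ge v-1$, and $d_v\ge v-1$ follows from the factorization $\mathbf D=\mathbf E'\mathbf W_V\mathbf W_O\mathbf U$.

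For (c), I substitute the structural form of $\mathbf D$ into the rewritten expression; the $\mathbf c$-piece cancels in every coordinate and three cases fall out. For $u\notin[\mathbf t]$, $f_\theta(\mathbf t)_u=\alpha_{t_s}+\tfrac{1}{s}\sum_{i<s}(\gamma(t_i)-\gamma(t_s))$ is independent of $u$, giving the equality half of precision. For $v=t_j$ with $j<s$, the displacement $f_\theta(\mathbf t)_u-f_\theta(\mathbf t)_{t_j}=-\tfrac{1}{s}a(t_j)>\tfrac{2C}{s}$ is immediate from $a(t_j)<-2C$. The delicate case is $v=t_s$, where $f_\theta(\mathbf t)_u-f_\theta(\mathbf t)_{t_s}=\delta_{t_s}+\tfrac{s-1}{s}a(t_s)$; here I rewrite $s\delta_{t_s}+(s-1)a(t_s)=s(\delta_{t_s}+a(t_s))-a(t_s)$, use length-$2$ precision at the last position ($2\delta_{t_s}+a(t_s)>2C$) together with condition~(\ref{equation:precision upper bound}) ($\delta_{t_s}<2C$) to obtain $\delta_{t_s}+a(t_s)>2C-\delta_{t_s}>0$, and finally combine with $-a(t_s)>2C$ to conclude $s\delta_{t_s}+(s-1)a(t_s)>2C$, i.e., precision $\tfrac{2C}{s}$ at length~$s$.
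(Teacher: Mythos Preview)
Your proof is correct. Parts (a) and (b) are close in spirit to the paper's argument: the paper isolates a single rank lemma (for matrices of the form $\mathbf 1\mathbf u^T+\mathbf v\mathbf 1^T+\mathrm{diag}(\mathbf w)$ with $w_i<0$, show rank $\ge n-1$ by checking injectivity on the hyperplane $\sum x_i=0$) and applies it to both $\mathbf B+\mathbf D$ and $\mathbf D$ after deriving the same structural decomposition you obtain. Your rank-subadditivity shortcut in (a) is a cleaner special case (only two terms, so no sign information needed), and your left-null-space computation in (b) is essentially the transpose of the paper's lemma, with the same pivot on the nonvanishing of $\sum_u 1/a(u)$.

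Part (c) is where your route genuinely diverges. The paper proceeds by induction on $s$, combining the length-$2$ and length-$s$ inequalities and subtracting off a length-$1$ term at each step. You instead substitute the closed form $\mathbf D=\mathbf 1\mathbf c^T+\boldsymbol\gamma\mathbf 1^T+\mathrm{diag}(\mathbf a)$ directly into the rewritten output and read off the three displacement cases by hand. This buys you explicit formulas for the logit gaps ($-a(t_j)/s$ at non-last positions, $\delta_{t_s}+\tfrac{s-1}{s}a(t_s)$ at the last one), makes transparent why the bound degrades exactly like $1/s$, and isolates precisely where the upper-bound hypothesis~(\ref{equation:precision upper bound}) is consumed (only in the last-position case, to force $\delta_{t_s}+a(t_s)>0$). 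The paper's inductive argument is more mechanical and hides this structure; your direct computation is shorter and more informative about the mechanism.
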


See Appendix \ref{proof of Theorem} for the Proof.

\subsection{Post-Norm can Resharpen Attention}\label{subsection:post-norm can resharpen attention}

Inspection of formula (\ref{equation:next token logit vector with constant attention}) shows how precision decreases with length: even if parameters are learnt that output precise results on small sequences as
\begin{align*}
f_\theta((t_1))&=\mathbf B_{t_1,:}+\mathbf D_{t_1,:}\\\text{ and }
f_\theta((t_1,t_2))&=\mathbf B_{t_2,:}+\frac{\mathbf D_{t_1,:}+\mathbf D_{t_2,:}}{2},
\end{align*}
in longer sequences, softmax attention dilutes the next token logit displacements $\mathbf D$.

Note that the dilution comes from the multiplier $\frac{1}{s}$ by sequence length. Therefore, we can resharpen attention by applying an RMSNorm \cite{zhang2019root} operation to the output. We will verify that this operation indeed fosters length generalization in the experimental, second half of our paper, both in case of the Set Complement Task, and further formal language tasks.

\subsection{Training: NLL of One-Hot Sampled Target Distribution}
\label{subsection:nll}

\begin{figure*}[ht]
  \vskip 0.2in
  \begin{center}
    \centerline{\includegraphics{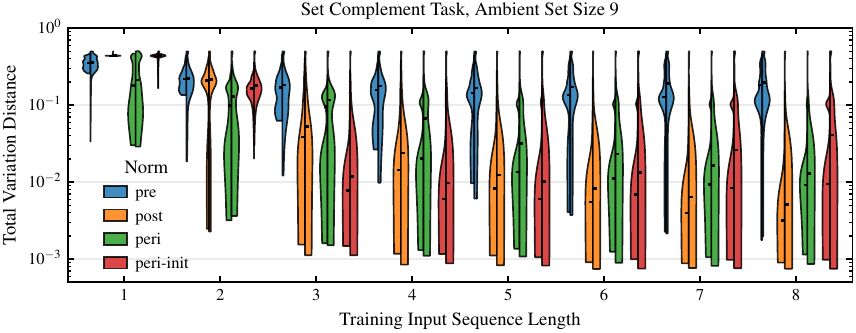}}
    \caption{
      Distribution of the validation TVD at the best checkpoint of each of the 1000 $d=8$, $d_k=1$, $d_v=8$ minimal transformers trained on the Set Complement Task with ambient set size $v=9$ and varying input sequence length $s$. In each violin plot, the left half shows the values without BEMA, and the right half with BEMA.
    }
    \label{figure:set complement 9}
  \end{center}
\end{figure*}

We seek to get models with next token probability distribution $p_\theta(\mathbf t)=\mathop{\mathrm{softmax}}(f_\theta(\mathbf t))\in\Delta^{v-1}$ approximating the uniform distribution $p^*(\mathbf t)$ on tokens absent from the input sequence $\mathbf t=(t_1,\dotsc,t_s)\in\mathbb X$, see Equation (\ref{equation:perfect solution}). However, in our study of training dynamics, we intend to follow the general practice in training generative language models: we sample an extra token $u\in\mathbb V\backslash[\mathbf t]$ and the model receives as loss the negative log likelihood 
$$
\mathop{\mathrm{NLL}}(\mathbf t, u;\theta)=-\log p_\theta(\mathbf t)_u
$$
between the predicted next token logit distributions after the input sequence $\mathbf t$ and the one-hot categorical distribution at token $u$.

As we have $u\notin[\mathbf t]$, the concatenation $(t_1,\dotsc,t_s,u)\in\mathbb V^{s+1}$ has no repetitions and is thus a valid input sequence if and only if we have $s<v-1$. We let $\bar{\mathbb X}$ denote the collection of sequences of tokens from $\mathbb V$ without repetition. That is, for a sequence $\mathbf t\in\bar{\mathbb X}$, we have $\mathbf t\in\mathbb X$ if and only if $|\mathbf t|<v$. For a prefix length $1\le s'\le|\mathbf t|$, the \emph{prefix sequence of length $s'$} is $\mathbf t_{:s'}=(t_1,\dotsc,t_{s'})\in\bar{\mathbb X}$.

In our case of main interest, that of length generalization, the lengths $s$ of input sequences are much smaller than the ambient set size $v$. This means that the number $v-s$ of possible target next tokens $u\in\mathbb V\backslash[\mathbf t]$ is large. Therefore, the model will receive training signals with high noise, thus slowing training.

\subsection{Mitigating Slowdown from Noisy Sampled Targets with BEMA}
\label{subsection:bema}

\begin{figure*}[ht]
  \vskip 0.2in
  \begin{center}
    \centerline{\includegraphics{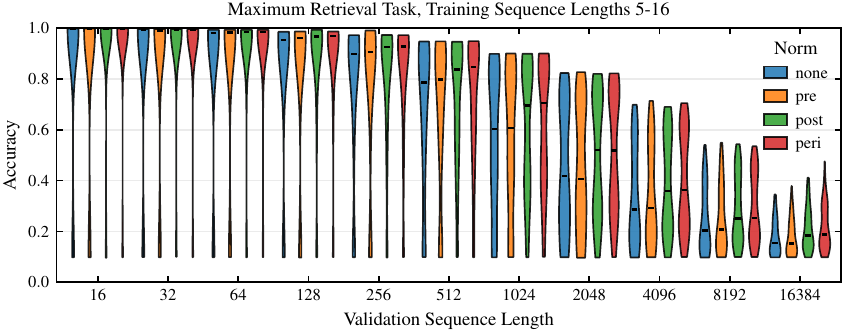}}
    \caption{
      Distribution of the validation accuracies at the best checkpoints of each of the 100 single attention head architectures trained on the Maximum Retrieval Task with training input sequence lengths sampled uniformly from the set $\{5,\dotsc,16\}$.
    }
    \label{figure:max retrieval}
  \end{center}
\end{figure*}

We hypothesize that Exponential Moving Average (EMA), a general remedy for gradient noise-induced slowdown, may provide a mitigation in this setting. We use Bias-Corrected Exponential Moving Average (BEMA) \cite{block2025ema}, that we now introduce for completeness:

We use three hyperparameters:  the \emph{EMA lag} $\rho$, the \emph{EMA power} $\kappa$, and the \emph{BEMA power} $\eta$.
At training step $n\ge0$, let $\theta_n$ denote the parameter values. In particular, we denote the intial parameter values as $\theta_0$. In the context of EMA, we also call them \emph{training parameter values}. Then the \emph{EMA parameter values} $\theta^\mathrm{EMA}_n$ are inductively defined as follows:
\begin{align*}
\theta^\mathrm{EMA}_0&=\theta_0\text{ and }\\
\theta^\mathrm{EMA}_{n+1}&=(1-\beta_n)\theta^\mathrm{EMA}_n + \beta_n\theta_{n+1}
\text{ where }\beta_n=(\rho+n)^{-\kappa}.
\end{align*}
We call $\beta_n$ an \emph{EMA weight}. Finally, at inference, we use the \emph{BEMA parameter values} $\theta^\mathrm{BEMA}_n$, that are defined as follows:
$$
\theta^\mathrm{BEMA}_n=\alpha_n(\theta_n-\theta_0) + \theta^\mathrm{EMA}_n\text{ where }
\alpha_n=(\rho+n)^{-\eta}.
$$
We call $\alpha_n$ a \emph{BEMA weight}.

Note that we only need to store, in addition to the most recent training parameter values $\theta_n$, the initial parameter values $\theta_0$, and the most recent EMA parameter values $\theta^\mathrm{EMA}_n$.

\section{Experimental Setup}\label{section:experimental setup}

In this Section, we detail the experimental setup in which we verify if the tools we propose indeed foster length generalization.

\subsection{Normalization Options}

In our survey of related work in Section \ref{section:related work}, we in part discuss how normalization layers can be used in transformer models. Following \cite{kim2025periln}, in our experiments, we consider the following four normalization options:
\begin{description}
  \item[pre-norm] Normalize the input of the residual blocks.
  \item[post-norm] Normalize residual stream after each residual update.
  \item[peri-norm] Normalize the input and the output of each residual block, and the input of the unembedding.
  \item[peri-init-norm] Besides the positions listed in peri-norm, also normalize the output of the embedding.
\end{description}

\subsection{Hyperparameter Distributions}

As we consider various architectural options and various datasets, in order to have a robust comparison, we need to run experiments with various hyperparameter configurations. See for example \cite{zhang2022adam} on the importance of tuning the first and second moment decay rates of Adam \cite{kingma2017adam}. See Appendix \ref{section:hyperparameter distribution} for the hyperparameter distribution.

\subsection{Main Metric: Total Variation Distance}\label{sunsection:metrics}

Let $\mathbf t\in\mathbb X$ be an input sequence. Recall that $f_\theta(\mathbf t)\in\mathbb R^v$ is the vector of predicted unnormalized next token logits to follow $\mathbf t$, and we let $p_\theta(\mathbf t)=\mathop{\mathrm{softmax}}(f_\theta(\mathbf t))$ denote the corresponding next token probabilities.

The most important metric which we utilize to measure how closely the predicted distribution $p_\theta(\mathbf t)$ approximates the target distribution $p^*(\mathbf t)$, in particular the uniform distribution on legal tokens in case of the Set Complement task (see Equation (\ref{equation:perfect solution})), is \emph{total variation distance (TVD)}:
\begin{equation}\label{equation:tvd}
\mathop{\mathrm{TVD}}(\mathbf t;\theta)=\frac12\sum_{t=1}^v|p_\theta(\mathbf t)_i-p^*(\mathbf t)_i|.
\end{equation}
In further tasks, we use additional metrics, which are discussed as needed.

\subsection{Dataloaders, Loss and Metric Aggregation}\label{subsection:dataloaders, loss and metric aggregation}

Both our training and validation dataloaders output minibatches of sequences $\mathbf T=(\mathbf t_1,\dotsc,\mathbf t_N)\in\bar{\mathbb X}^N$. The difference is in the length of the sequences: in order to study length generalization, the training dataloaders output shorter minibatches than the validation dataloaders.

Let $\mathbf T$ denote a training or validation minibatch. We follow the standard convention to aggregate the loss and the metrics not only by averaging across the minibatch entries,
but also across the sequential dimension.
Sometimes, we only aggregate over bins of sequence lengths, or report metrics on given sequence lengths.

\subsection{Training}

\begin{figure*}[ht]
  \vskip 0.2in
  \begin{center}
    \centerline{\includegraphics{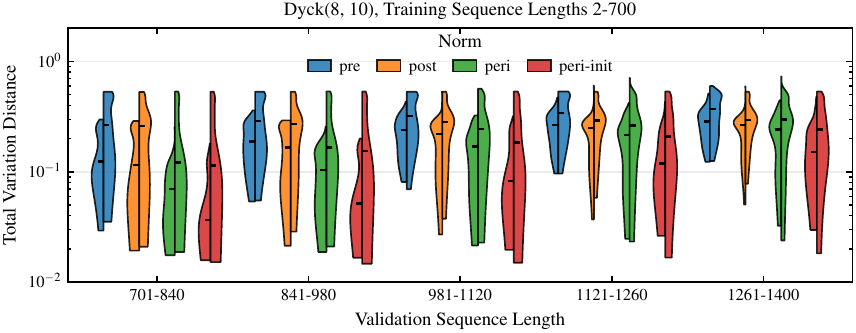}}
    \caption{
      Distribution of the TVD at the best checkpoints of each of the 100 2-layer, 64-dimensional, 4-headed transformers on the $\mathrm{Dyck}(8, 10)$ with maximum training input sequence length 700.
    }
    \label{figure:Dyck(8,10)}
  \end{center}
\end{figure*}

Following standard conventions, we initialize parameter matrices with normal distribution of std $\sigma=0.02$ and truncated at $2\sigma$. We use the AdamW \cite{loshchilov2018decoupled} optimizer. We follow the standard practice of disabling weight decay on embedding, and norm vectors; the latter decision is ablated in \cite{d'angelo2024why}. For learning rate schedule, we use linear warmup, and linear decay \cite{bergsma2025straight}. For each of our models, we report two sets of metrics: one for the training parameters $\theta$, and one for the BEMA parameters $\theta^\mathrm{BEMA}$, see Subsection \ref{subsection:bema}.

\section{Experiment Results}\label{section:experiment results}

\subsection{Set Complement Task}\label{subsection:sct results}

\begin{figure*}[ht]
  \vskip 0.2in
  \begin{center}
    \centerline{\includegraphics{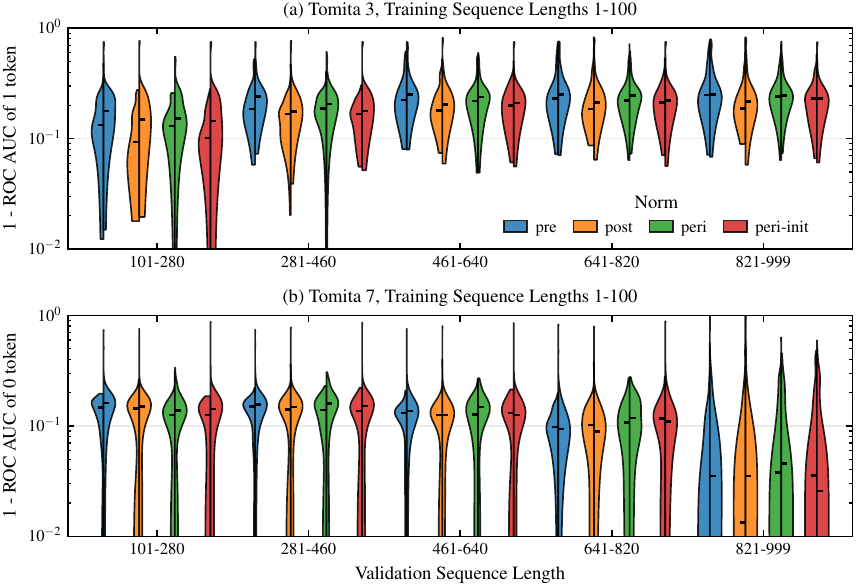}}
    \caption{
      Distribution of the ROC AUC scores at tokens 1 resp.~0 at the best checkpoints of each of the 100 2-layer, 64-dimensional, 4-headed transformers on the Tomita 3 resp.~7 languages with maximum training input sequence length 100.
    }
    \label{figure:Tomita 3, 7}
  \end{center}
\end{figure*}

For ambient set sizes $v=9,16$ and input sequence lengths $s=1,\dotsc,8$, we train minimal transformers (see Subsection \ref{subsection:minimal transformers}) with the minimal possible dimension $d=v-11$, $d_k=1$, $d_v=v-1$ permitted by Theorem \ref{theorem:bounds and length generalization}, for 1\,000\,000 training steps, on minibatches of size $N=256$.

See Figure \ref{figure:set complement 9} for the distribution of the best TVD of all our models in case $v=9$. One can see that all normalization methods that include some sort of post-normalization can length generalize from training input sequences of length $s\ge3$, peri-normalization can robustly length generalize from $s\ge2$ in concert with our Theorem \ref{theorem:bounds and length generalization}, and moreover, it can even length generalize to some extent with training input sequence length 1, thus overcoming our Theorem, that did not suppose post-normalization. We can also see that in most cases, BEMA can boost the best results a little further. See Appendix \ref{subsection:SCT appendix} for similar results in case $v=16,25$.

\subsection{Maximum Retrieval}\label{subsection:max retrieval results}

This task was designed in \cite{velickovic2025softmax} to showcase the attention dispersion effect in a simple setting. We sample $s$ set entries with a priority $\rho_i\sim\mathcal U([0,1))$ and a class $\kappa_i\sim\mathcal U(\{1,\dotsc,10\})$. We get input features $\mathbf x_i=(\rho_i|e_{\kappa_i})\in\mathbb R^{11}$. These are first embedded via a 2-layer MLP $\psi$ into a hidden representation $\mathbf h_i\in\mathbb R^{128}$. Then attention logits are formed as scalar products\footnote{Using a parameter vector $\mathbf q$ instead of another 2-layer MLP image $\psi_q(q)$ of a random value is our change.} $e_i=\mathbf q\cdot\left(\mathbf K\mathbf h_i\right)^T$. These in turn give aggregated values $\mathbf z=\sum_{i=1}^s\mathrm{softmax}(e_j)_i\mathbf V\mathbf h_i$, which are then transformed to a logit $\mathbf y=\phi(\mathbf z)\in\mathbb R^{10}$ by another 2-layer MLP. The model is trained to predict the class $i$ of the highest priority $\rho_i$.

We train 100 models on 100\,000 minibatches of 128 input sequences of lengths\footnote{In the original, the lengths in each minibatch are constant; we removed this restriction.} between 5 and 16 and then validated on longer sequences. We test the effect of including a normalization layer after the embedding and after value aggregation, respectively called pre- and post-normalization. We call using normalization at both positions peri-normalization. On Figure \ref{figure:max retrieval} we can see that we more robustly get higher accuracies with post- and peri-normalization, and the effect increases with sequence length.

\subsection{Bounded Dyck Languages}\label{subsection:dyck results}

In the Dyck language $\mathrm{Dyck}(k)$ with $k$ types, we have $k$ pairs of opening $(_i$ and closing $)_i$ parentheses. Then the valid sequences are the correctly nested sequences. In the theory of formal languages, these languages have central importance by the Chomsky--Sch\"utzenberger Representation Theorem \cite{chomsky1959algebraic}.

In the bounded Dyck language $\mathrm{Dyck}(k,m)$ of maximum depth $m$, there can be at most $m$ unclosed brackets at any time. It turns out that if we wanted to model hierarchical structures of clauses in ``Standard Average European'' languages, we could let $m=3$ \cite{karlsson2007constrains}.

We include Beginning of Sequence (BOS) and End of Sequence (EOS) tokens in the vocabulary. The first represents an empty sequence, and the latter a complete one. Note that if we train a generative model on random valid sequences, the predicted EOS logit can be used as a complete sequence classification signal.

Note that if we gave each valid token equal probability, then we would usually be in the depths $m-1$ and $m$, making EOS tokens very rare. Therefore, we follow prior work \cite{hewitt2020rnns} on benchmarking generative models on learning $\mathrm{Dyck}(k,m)$ in giving the depth increase and depth decrease / EOS events equal probability, thus making opening brackets have probability $\frac{1}{2k}$ when they are legal.

We train 100 2-layer transformers with 4 attention heads, 64 embedding dimensions, and Rotary Positional Encoding (RoPE) \cite{su2024roformer} on the task, for 100\,000 minibatches of size 64. In Figure \ref{figure:Dyck(8,10)}, we can see that, here too, normalization schemes that include post normalization have better length generalization results, while BEMA can boost the best metrics a little further. See Appendix \ref{subsection:extra dyck} for further results.

\subsection{Tomita Languages}\label{subsection:tomita results}

The Tomita languages are 7 formal languages on the vocabulary $\mathbb V=\{\mathrm{[BOS]}, 0, 1, \mathrm{[EOS]}\}$ created to benchmark how well a specific hill-climbing algorithm can find finite state automata that can recognize the languages \cite{tomita1982dynamic}. As the problem set features various long-dependency relationships, it has become an often used benchmark in training language models to recognize formal languages \cite{schmidhuber2001evaluating}.

Here, we consider languages 3 and 7. In 3, ``odd number of consecutive 1s are always followed by an even number of 0s'', while 7 ``has the regular expression $0^*1^*0^*1^*$''\cite{bhattamishra2020ability}. Note that if tokens 0, 1, and EOS had the same probability when legal, the sequences would terminate very early. So we set up the probabilities of EOS, and switches between 0 and 1, respectively, so that empirically about half of the sequences terminate by length 100.

Since, potentially for the very skewed token distributions, the TVD results do not give much signal, we show in Figure \ref{figure:Tomita 3, 7} the ROC AUC values of the tokens 1 and 0, respectively. That is, at each sequence length separately, we collect all validation entry pairs where the token is valid in one and not in the other, and measure the proportion of times that in the valid case the predicted token logit is higher. We can see that the peri-normalization methods are better on validation sequence lengths up to 460 on Tomita 3, while in other cases, they all produce similar results.

\section{Conclusion}

Length generalization, the ability of performing a task on contexts longer than those seen during training is an essential trait of autonomous agents. Transformers, the leading architecture of our time, usually use softmax attention, which is proven to exhibit attention dispersion, thus performance degrades with sequence length. Via a mechanistic study on the Set Complement Task, we discover that post-normalization, that is normalizing the output of attention blocks may mitigate this effect. We provide experimental evidence on various algorithmic tasks.

\section*{Acknowledgements}

Part of this research was conducted under the auspices of the Budapest Semesters in Mathematics program's
"Research Opportunities" initiative. P. Zs. was supported by the Ministry of Innovation and
Technology NRDI Office within the framework of the Artificial Intelligence National Laboratory (RRF-
2.3.1-21-2022-00004).

\section*{Impact Statement}

This paper presents work whose goal is to advance the field of Machine
Learning. There are many potential societal consequences of our work, none
which we feel must be specifically highlighted here.

\bibliography{post_norm_can_resharpen_attention}
\bibliographystyle{icml2026}

\newpage
\appendix
\onecolumn
\section{Proof of Theorem \ref{theorem:bounds and length generalization}}\label{proof of Theorem}

The following Lemma is a key component of the tight dimension bounds:

\begin{lemma}\label{lemma:rank n-1} Let $\mathbf u,\mathbf v,\mathbf w\in\mathbb R^n$ be $n$-dimensional vectors. Suppose that we have $w_i<0$ for all indices $1\le i\le n$. Then the matrix $\mathbf A:=\boldsymbol 1\mathbf u^T+\mathbf v\boldsymbol1^T + \mathop{\mathrm{diag}}(\mathbf w)$ has rank at least $n-1$.

\end{lemma}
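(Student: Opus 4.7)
The plan is to show $\dim\ker\mathbf A\le 1$. The key structural observation is that two out of the three summands of $\mathbf A$ are rank-one, so the action of $\mathbf A$ on a vector $\mathbf x$ depends on $\mathbf x$ only through two global scalars $\alpha:=\mathbf u^T\mathbf x$ and $\beta:=\boldsymbol 1^T\mathbf x$ together with the coordinatewise diagonal piece. Writing out the $i$-th row of $\mathbf A\mathbf x=\boldsymbol 0$ gives exactly
$$\alpha+v_i\beta+w_ix_i=0,$$
and since $w_i\ne 0$ I can solve $x_i=-(\alpha+v_i\beta)/w_i$. Thus $\mathbf x$ is determined by the pair $(\alpha,\beta)$, so the map $\mathbf x\mapsto(\alpha,\beta)$ is injective on $\ker\mathbf A$. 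This already yields the weaker bound $\dim\ker\mathbf A\le 2$.

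To remove the last dimension, I would substitute the explicit formula for $x_i$ back into the defining identities $\alpha=\mathbf u^T\mathbf x$ and $\beta=\boldsymbol 1^T\mathbf x$. This produces a homogeneous $2\times 2$ linear system in $(\alpha,\beta)$ whose coefficients are the partial sums $\sum_i u_i/w_i$, $\sum_i u_iv_i/w_i$, $\sum_i 1/w_i$, and $\sum_i v_i/w_i$, shifted by $+1$ on the diagonal. The sign hypothesis enters here, and only here: every $w_i<0$ implies every $1/w_i<0$, so $\sum_i 1/w_i$ is strictly negative, hence nonzero. Consequently the $2\times 2$ coefficient matrix is not the zero matrix, so it has rank at least $1$, and the solution set for $(\alpha,\beta)$ has dimension at most $1$.

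Composing the two steps, $\ker\mathbf A$ embeds injectively into a $1$-dimensional subspace of $\mathbb R^2$, so $\dim\ker\mathbf A\le 1$ and $\mathrm{rank}(\mathbf A)\ge n-1$, as required.

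I do not anticipate a genuine obstacle: the argument is a short linear-algebra computation once the two-scalar reduction is spotted. The delicate point is the second step, where one must carefully verify that the consistency relations really do collapse to a single homogeneous $2\times 2$ system and then identify which specific entry is forced to be nonzero by the sign hypothesis. This is also precisely where the argument would break if the $w_i$ were allowed to have mixed signs that happened to make $\sum_i 1/w_i$ vanish.
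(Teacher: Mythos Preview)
Your proof is correct and follows essentially the same approach as the paper's: both reduce kernel membership to the coordinatewise identity $x_i=-(\alpha+v_i\beta)/w_i$ and both hinge on the single sign fact $\sum_i 1/w_i<0$. The paper's version is marginally shorter because it restricts at the outset to the hyperplane $\{\mathbf x:\sum_i x_i=0\}$, which kills $\beta$ immediately and collapses your $2\times 2$ system to a one-scalar argument.
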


\begin{proof} It is enough to show that the matrix $\mathbf A$ is injective on the 1-codimensional subspace $Z:=\{\mathbf x\in\mathbb R^n:\sum_{i=1}^nx_i=0\}$. Take $\mathbf x\in Z$ and suppose that we have $\mathbf A\mathbf x=\boldsymbol0$. Let $\alpha=\mathbf u^T\mathbf x$. Then for each $1\le i\le n$, we have
$$
0=(\mathbf A\mathbf x)_i=\alpha+v_i\sum_{i=1}^nx_i+w_ix_i=\alpha+w_ix_i.
$$
If $\alpha=0$, then as $w_i<0$ for all $1\le i\le n$, we get $\mathbf x=\boldsymbol0$. Otherwise, we get
$$
0=\sum_{i=1}^nx_i=-\alpha\sum_{i=1}^n\frac{1}{w_i}<0,
$$
a contradiction.

\end{proof}

\begin{proof}[Proof of Theorem \ref{theorem:bounds and length generalization}] (a) In terms of the matrices $\mathbf B$ and $\mathbf D$, the fact that the model $f_\theta$ has precision $C$ at length 1 reads as, for distinct tokens $t,u,v\in\mathbb V$:
  \begin{align}
    b_{t,u}+d_{t,u}&>b_{t,t}+d_{t,t}+C\label{equation:logit displacement length 1}\\
    b_{t,u}+d_{t,u}&=b_{t,v}+d_{t,v}\label{equation:logit equality length 1}
  \end{align}
These conditions imply the conditions of Lemma \ref{lemma:rank n-1} for the matrix $\mathbf B+\mathbf D$, thus showing that we have $\mathop{\mathrm{rank}}(\mathbf B+\mathbf D)\ge v-1$.

(b) In terms of the matrices $\mathbf B$ and $\mathbf D$, the fact that the model $f_\theta$ has precision $C$ at length 2 reads as, for distinct tokens $t,u,v,w\in\mathbb V$:
\begin{align}
2b_{t,v}+d_{t,v}+d_{u,v}&>2b_{t,t}+d_{t,t}+d_{u,t}+2C\label{equation:logit displacement length 2 diagonal}\\
2b_{t,v}+d_{t,v}+d_{u,v}&>2b_{t,u}+d_{t,u}+d_{u,u}+2C\label{equation:logit displacement length 2 nondiagonal}\\
2b_{t,v}+d_{t,v}+d_{u,v}&=2b_{t,w}+d_{t,w}+d_{u,w}\label{equation:logit equality length 2}.
\end{align}

Equations (\ref{equation:logit equality length 1}) and (\ref{equation:logit equality length 2}) show that for all distinct tokens $t,u,v\in\mathbb V$: the difference $d_{t,v}-d_{u,v}$ is constant in $v$. Let us denote this by $\alpha_{t,u}$, and let $\alpha_{t,t}:=0$.

Let us fix $r\in\mathbb V$ and let $\mathbf a,\mathbf c\in\mathbb R^v$ be defined by $a_t=\alpha_{t,r},c_t=d_{r,t}$ for $t\in\mathbb V$. Then for all distinct $t,u\in\mathbb V$: we have $d_{t,u}=a_t+c_u$. Moreover, Constraints (\ref{equation:logit equality length 1}) and (\ref{equation:logit displacement length 2 nondiagonal}) show that we have $d_{t,t}-a_t-c_t<0$. Therefore, Lemma \ref{lemma:rank n-1} shows that we have $\mathop{\mathrm{rank}}(\mathbf D)\ge v-1$.

(c) Let us prove that $f_\theta$ has precision $\frac{2}{ s}C>0$ at length $ s$ by induction on $1\le  s<v$. By assumption, the induction hypothesis holds for $ s=1,2$. Let us assume that it holds for $ s$, that is, the following constraints are satisfied, for distinct tokens $t_1,\dotsc,t_ s,u,v\in\mathbb V$, and indices $1\le i< s$:
\begin{align}
 s b_{t_ s,u}+d_{t_1,u}+\dotsb+d_{t_ s,u}&> s b_{t_ s,t_ s}+d_{t_1,t_ s}+\dotsb+d_{t_ s,t_ s}+2C\label{equation:logit displacement length l diagonal}\\
 s b_{t_ s,u}+d_{t_1,u}+\dotsb+d_{t_ s,u}&> s b_{t_ s,t_i}+d_{t_1,t_i}+\dotsb+d_{t_ s,t_i}+2C\label{equation:logit displacement length l nondiagonal}\\
 s b_{t_ s,u}+d_{t_1,u}+\dotsb+d_{t_ s,u}&= s b_{t_ s,w}+d_{t_1,w}+\dotsb+d_{t_ s,w}.\label{equation:logit equality length l}
\end{align}
Let us undertake proving the induction step. By Inequalities (\ref{equation:logit displacement length 2 diagonal}) and (\ref{equation:logit displacement length l diagonal}), we get
\begin{align*}
&(2b_{t_{ s+1},u}+d_{t_1,u}+d_{t_{ s+1},u})
+( s b_{t_{ s+1},u}+d_{t_2,u}+\dotsb+d_{t_{ s+1},u})\\
>&(2b_{t_{ s+1},t_{ s+1}}+d_{t_1,t_{ s+1}}+d_{t_{ s+1},t_{ s+1}})\\
&+( s b_{t_{ s+1},t_{ s+1}}+d_{t_2,t_{ s+1}}+\dotsb+d_{t_{ s+1},t_{ s+1}})
+2C+2 C,
\end{align*}
which by Inequality (\ref{equation:precision upper bound}) yields
\begin{align*}
&( s+1)b_{t_{ s+1},u}+d_{t_1,u}+\dotsb+d_{t_{ s+1},u}\\>&
( s+1)b_{t_{ s+1},t_{ s+1}}+d_{t_1,t_{ s+1}}+\dotsb+d_{t_{ s+1},t_{ s+1}}+2C.
\end{align*}
Then note that by Equation (\ref{equation:logit equality length 1}), Equation (\ref{equation:logit equality length 2}) is equivalent to the following equation:
\begin{equation}\label{equation:logit equality length 2 variant 2}
b_{t_{ s+1},u}+d_{1,u}=d_{t_{ s+1},w}+d_{1,w}.
\end{equation}
With this and Inequality (\ref{equation:logit displacement length l nondiagonal}), we get
\begin{align*}
&(b_{t_{ s+1},u}+d_{1,u}+d_{t_{ s+1},u})
+( s b_{t_{ s+1},u}+d_{t_2,u}+\dotsb+d_{t_{ s+1},u})\\
>&(b_{t_{ s+1},t_j}+d_{1,u}+d_{t_{ s+1},t_j})
+( s b_{t_{ s+1},t_j}+d_{t_2,u}+\dotsb+d_{t_{ s+1},t_j})+ 2C
\end{align*}
Finally, Equalities (\ref{equation:logit equality length 2 variant 2}) and (\ref{equation:logit equality length l}) yield
\begin{align*}
&(b_{t_{ s+1},u}+d_{1,u}+d_{t_{ s+1},u})
+( s b_{t_{ s+1},u}+d_{t_2,u}+\dotsb+d_{t_{ s+1},u})\\
=&(b_{t_{ s+1},v}+d_{1,u}+d_{t_{ s+1},v})
+( s b_{t_{ s+1},v}+d_{t_2,u}+\dotsb+d_{t_{ s+1},v})
\end{align*}

\end{proof}

\section{Hyperparameter Distribution for Random Search}\label{section:hyperparameter distribution}

The following table describes the distribution from which we drew the hyperparameter configurations for our experiments.

\begin{table}[h]
	\caption{Hyperparameter distributions for the random search}
	\label{table:hyperparameter distributions}
  \begin{center}
	\begin{tabular}{rll}\toprule
		\textit{Hyperparameter} & \textit{Distribution} & \textit{Range} \\
    \midrule
		\multicolumn{3}{l}{\textit{Model}} \\
    \midrule
    RMSNorm $\epsilon$ & $10^{\mathcal U[-10,-4]}$ & $[10^{-10},10^{-4}]$ \\
    \midrule
		\multicolumn{3}{l}{\textit{AdamW}} \\
    \midrule
1.~moment decay $\beta_1$ & $1-10^{\mathcal U[-2,0]}$ & $[0, 1-10^{-2}]$ \\
2.~moment decay $\beta_2$ & $1-10^{\mathcal U[-1,-5]}$ & $[1-10^{-1}, 1-10^{-5}]$ \\
weight decay $\lambda$ & $10^{\mathcal U[-6,0]}$ & $[10^{-6}, 1]$ \\
AdamW $\epsilon$ & $10^{\mathcal U[-12,-8]}$ & $[10^{-12}, 10^{-8}]$ \\
max gradient norm & $10^{\mathcal U[-2, 2]}$ & $[10^{-2},10^2]$ \\
    \midrule
		\multicolumn{3}{l}{\textit{Learning Rate Schedule}} \\
    \midrule
peak learning rate $\eta$ & $10^{\mathcal U[-5,-2]}$ & $[10^{-5}, 10^{-2}]$ \\
warmup steps & $\lfloor 10^{\mathcal U[-2,4]}\rfloor$ & $[0,10^4]$ \\
multiplier at end & $10^{\mathcal U[-4,0]}$ & $[10^{-4},1]$ \\
    \midrule
		\multicolumn{3}{l}{\textit{BEMA}} \\
    \midrule
BEMA power $\eta$ & $\mathcal U[0,1]$ & $[0,1]$ \\
EMA lag $\rho$ & $10^{\mathcal U[0,10]}$ & $[1,10^{10}]$ \\
EMA power $\kappa$ & $\mathcal U[0,1]$ & $[0,1]$ \\
    \bottomrule
	\end{tabular}
\end{center}
\end{table}

\section{Further Experimental Results}\label{secion:further experiments}

\subsection{Set Complement Task}\label{subsection:SCT appendix}

With ambient set size $v=16,25$, we observe the same effect: including normalization of attention block output gives length generalization a big boost, and BEMA can further improve the best results.

\begin{figure*}[ht]
  \vskip 0.2in
  \begin{center}
    \centerline{\includegraphics{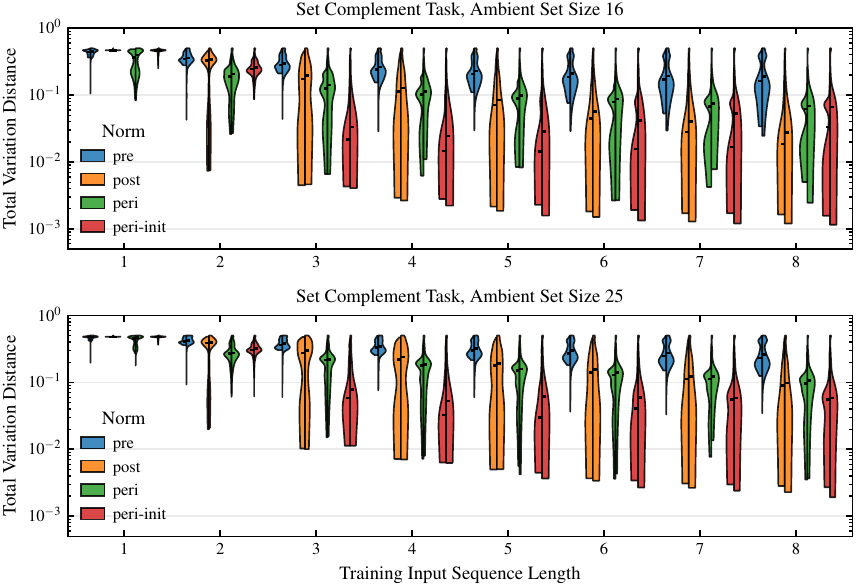}}
    \caption{
      Distribution of the validation TVD at the best checkpoint of each of the 1000 $d=v-1$, $d_k=1$, $d_v=v-1$ minimal transformers trained on the Set Complement Task with ambient set sizes $v=16,25$ and varying input sequence length $s$. In each violin plot, the left half shows the values without BEMA, and the right half with BEMA.
    }
    \label{figure:set complement 16}
  \end{center}
\end{figure*}

\subsection{$\mathrm{Dyck}(k,m)$}\label{subsection:extra dyck}

We run further experiments on bounded Dyck languages. In all cases, we can observe similarly that peri-normalization schemes offer the best length generalization and pre-normalization the worst. We can furthermore observe a more beneficial effect of BEMA for larger numbers $k$ of types, probably due to the increased sampling noise that comes with more opening brackets being possible. Moreover, we see worse TVD with smaller maximum depth $m$, probably indicating that the EOS logits are not well calibrated. To show that, in Figure \ref{figure:dyck extra roc auc}, we display the ROC AUC scores of the EOS tokens, showing that with peri-normalization, the models can robustly classify complete sequences.

\begin{figure*}[ht]
  \vskip 0.2in
  \begin{center}
    \centerline{\includegraphics{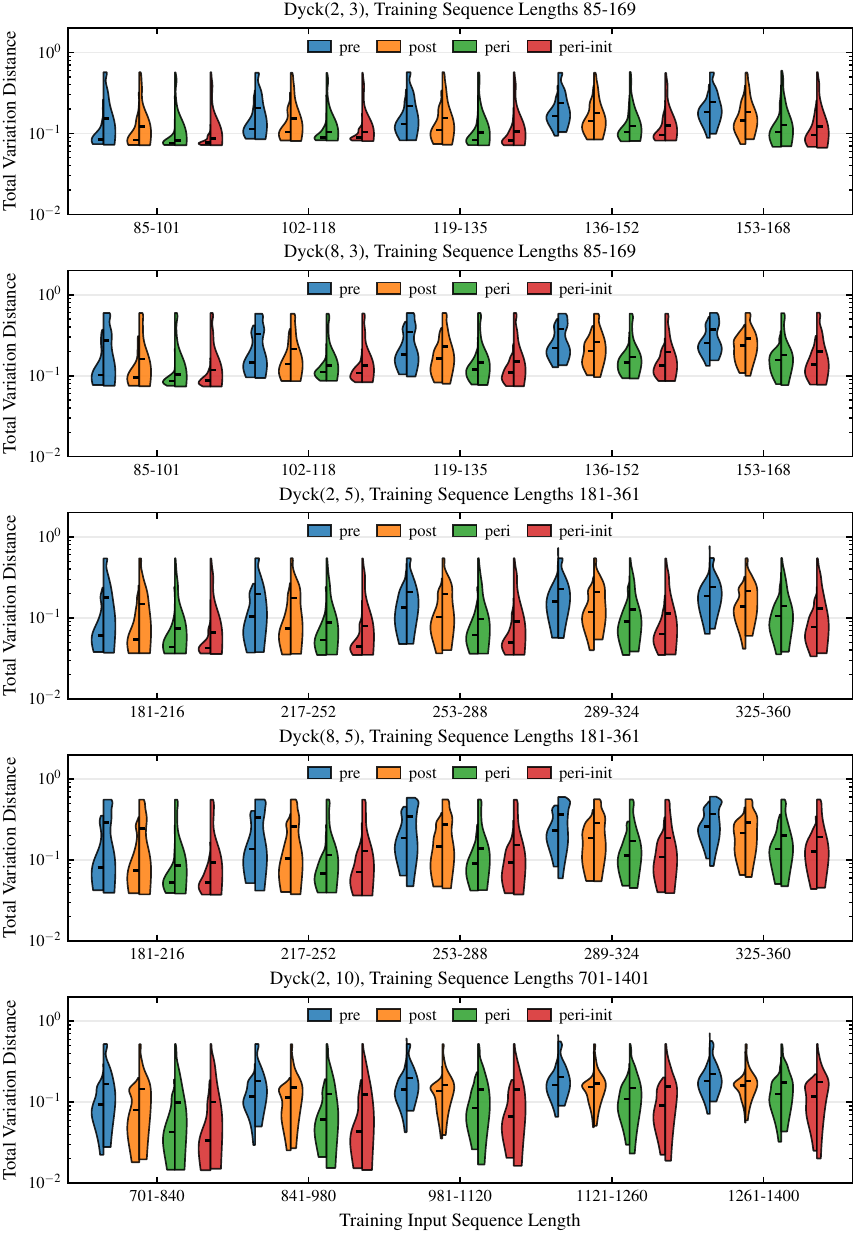}}
    \caption{
      Distribution of the TVD at the best checkpoints of each of the 100 2-layer, 64-dimensional, 4-headed transformers on further $\mathrm{Dyck}(k,m)$ tasks
    }
    \label{figure:dyck extra tvd}
  \end{center}
\end{figure*}

\begin{figure*}[ht]
  \vskip 0.2in
  \begin{center}
    \centerline{\includegraphics{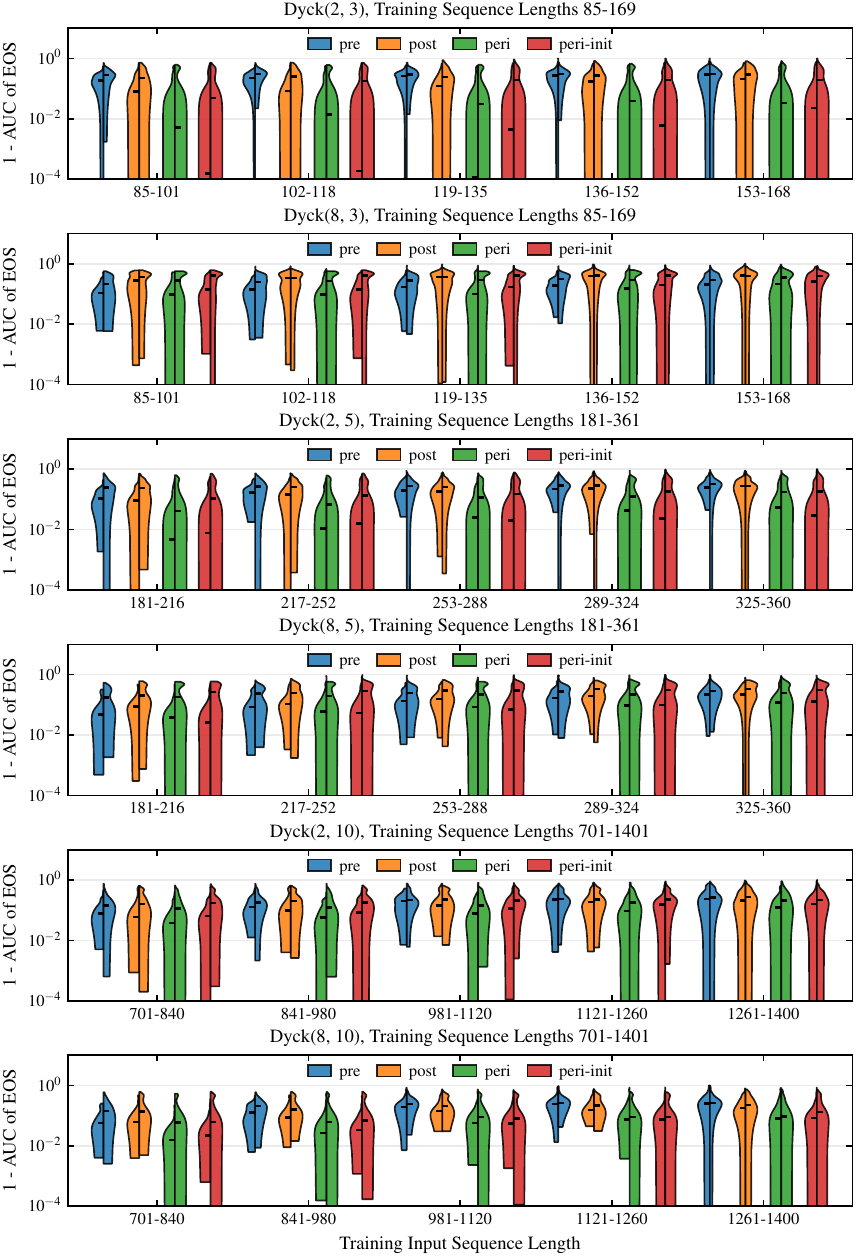}}
    \caption{
      Distribution of the ROC AUC of the EOS token at the best checkpoints of each of the 100 2-layer, 64-dimensional, 4-headed transformers on further $\mathrm{Dyck}(k,m)$ tasks
    }
    \label{figure:dyck extra roc auc}
  \end{center}
\end{figure*}

\end{document}